\newtheorem{lemma}{Lemma}
\newtheorem{corollary}{Corollary}
\theoremstyle{plain}
\newtheorem{theorem}{Theorem}[section]
\theoremstyle{definition}
\theoremstyle{remark}
\begin{document}

\title{Theoretical Convergence of SMOTE-Generated Samples}

\author{
Firuz Kamalov$^{1}$ \and
Hana Sulieman$^{2}$ \and
Witold Pedrycz$^{3}$\\[0.5em]
$^{1}$Department of Computational Sciences, Canadian University Dubai\\
$^{2}$Department of Mathematics and Statistics, American University of Sharjah\\
$^{3}$Electrical \& Computer Engineering Department, University of Alberta
}

\maketitle

\begin{abstract}
Imbalanced data affects a wide range of machine learning applications, from healthcare to network security. As SMOTE is one of the most popular approaches to addressing this issue, it is imperative to validate it not only empirically but also theoretically. 
In this paper, we provide a rigorous theoretical analysis of SMOTE’s convergence properties. Concretely, we prove that  the synthetic random variable \( Z \) converges in probability to the underlying random variable \( X \). We further prove a stronger convergence in mean when \( X \) is compact. Finally, we show that lower values of the nearest neighbor rank lead to faster convergence offering actionable guidance to practitioners. The theoretical results are supported by numerical experiments using both real-life and synthetic data.
Our  work provides a foundational understanding that enhances data augmentation techniques beyond imbalanced data scenarios.

\end{abstract}

\noindent\textbf{Keywords:} convergence, imbalanced data, optimal rank, sampling, SMOTE, theoretical analysis

\section{Introduction}

Imbalanced datasets are a pervasive challenge in machine learning and statistical analysis, where one class (the minority class) is significantly underrepresented compared to others (the majority class). The imbalance often leads to biased predictive models that perform poorly on the minority class~\cite{Thabtah}, which may represent critical cases such as fault detection \cite{Kuang}, medical diagnoses \cite{Khushi}, network security \cite{Ding}, or computer vision \cite{Li2021}. To mitigate this issue, various data augmentation techniques have been developed to generate synthetic samples of the minority class, thereby balancing the dataset and improving model performance.

One of the most widely used augmentation methods is the Synthetic Minority Over-sampling Technique (SMOTE), introduced by Chawla et al.~\cite{Chawla}. SMOTE generates new synthetic samples by interpolating between existing minority class instances and their nearest neighbors. Specifically, for a given minority class sample, synthetic points are created along the line segments joining it with its \( k \)-nearest neighbors. This approach has been empirically successful and has been integrated into numerous applications and extended in various ways~\cite{Dablain, Wang}.

Despite its widespread adoption and a plethora of empirical studies that analyze its properties, the theoretical aspects of SMOTE remain relatively unexplored. In particular, there is limited understanding of how the synthetic samples generated by SMOTE relate to the original data distribution as the sample size increases. 
The question of convergence is central to understanding the effectiveness of the SMOTE algorithm.
The gap in the literature is significant because a theoretical foundation would provide insights into the algorithm's effectiveness, inform parameter selection, and guide the development of improved augmentation methods.

In this paper, we fill the gap in analytical framework by providing a theoretical analysis of the convergence of SMOTE-generated samples to the original data distribution. Our study considers a continuous random variable \( X \) and examines the behavior of the synthetic random variable \( Z \), which is generated from an independent and identically distributed (i.i.d.) sample of \( X \).

Our main contributions are as follows:

\begin{enumerate}
    \item \textbf{Convergence in Probability:} We prove that the synthetic random variable \( Z \) converges in probability to the original random variable \( X \) as the sample size \( n \) approaches infinity (Theorem~\ref{convProb}). This result establishes that, with sufficiently large samples, the distribution of SMOTE-generated samples approximates the original data distribution.

    \item \textbf{Impact of Nearest Neighbor Rank \( k \):} We analyze the effect of the choice of \( k \) on the convergence rate of \( Z \) to \( X \). Our analysis reveals that selecting lower values of \( k \) leads to faster convergence. Specifically, we demonstrate through Equation~\ref{compk} that the expected distance between a sample point and its \( k \)-th nearest neighbor increases with \( k \). Consequently, synthetic samples generated using higher values of \( k \) are more likely to be farther from the original data points, potentially slowing down convergence.

    \item \textbf{Empirical Validation:} We support our theoretical findings with simulation studies using uniform, Gaussian, and exponential distributions. The simulations illustrate the convergence behavior of \( Z \) for different values of \( k \) and sample sizes \( n \) (Figures~\ref{fig:uniform}--\ref{fig:gaussian}). We observe that lower values of \( k \) result in faster convergence of \( Z \) to the distribution of \( X \), as measured by the Kullback-Leibler (KL) divergence (Figure~\ref{fig:kl}).

    \item \textbf{Convergence in Mean:} Under the additional assumption that \( X \) has compact support, we establish that \( Z \) converges to \( X \) in mean as \( n \rightarrow \infty \) (Theorem~\ref{thm:convmean}). This stronger form of convergence provides a deeper understanding of the relationship between the synthetic and original data distributions.
\end{enumerate}

Our results have practical implications for the application of SMOTE in data augmentation. They suggest that using the nearest neighbor (\( k = 1 \)) in the SMOTE algorithm is preferable for achieving faster convergence to the original distribution. This insight can guide practitioners in selecting appropriate parameters for SMOTE and potentially improve the performance of models trained on augmented data.

The remainder of the paper is organized as follows. In Section~\ref{sec:main}, we present the main theoretical results, including proofs of convergence in probability and mean. Section~\ref{sec:num} details the simulation studies that corroborate our theoretical findings and explore the effects of different choices of \( k \) and sample sizes \( n \). Finally, in Section~\ref{sec:conclusion}, we discuss the implications of our work and suggest directions for future research.

We believe that this work contributes significantly to the theoretical understanding of SMOTE and opens avenues for further exploration of data augmentation techniques in machine learning.

\begin{figure*}[!t]
\centering
\begin{equation}
\label{eq:elreedy}
\begin{split}
    f_Z(z)&= (N-K) \binom{N-1}{K} \int_x f_X(x)
 \int_{r=|| z-x||}^\infty f_X \left(x+ \frac{(z-x)r}{||z-x||}\right)
 \left( \frac{r^{d-2}}{||z-x||^{d-1}} \right)  \\
&\times B\left(1-I_{B(x,r);N-K-1,K} \right)dr dx.  
\end{split}
\end{equation}
\end{figure*}
\section{Background}
\label{sec:lit}

Handling imbalanced data remains a dynamic area of research. Although numerous sampling algorithms have been developed \cite{Xie, Liu2023}, the SMOTE algorithm is arguably the most widely recognized method in the literature. Initially introduced by Chawla et al. in 2002 \cite{Chawla}, SMOTE has been successfully applied across a variety of applications.
There are numerous extensions of SMOTE that aim to enhance the original algorithm’s sampling performance. For example, the Adaptive Synthetic (ADASYN) algorithm \cite{He2008} is similar to SMOTE but varies the number of generated samples based on the estimated local distribution of the class being oversampled. Another widely used variant, Borderline SMOTE \cite{Han}, identifies borderline samples and utilizes them to create new synthetic instances. Additionally, SVM-SMOTE was developed in \cite{Nguyen} by incorporating support vector machines into the SMOTE framework. More recently, several other extensions have been proposed, including Center Point SMOTE, Inner and Outer SMOTE \cite{Bao}, DeepSMOTE~\cite{Dablain}, and Deep Attention SMOTE \cite{Liu}, among others, further expanding the applicability and effectiveness of SMOTE in various contexts.

Despite its popularity and wide-spread use, there exist very few studies that analyze theoretical properties of SMOTE. In early attempt to understand theoretical aspects of sampling strategies, King and Zeng \cite{King} studied the random undersampling strategy in the context of the logistic regression.
In \cite{Elreedy2019},  Elreedy and Atiya derived the expectation and covariance matrix of the SMOTE generated patterns. More recently, Elreedy et al. developed the theoretical distribution of $Z$ \cite{Elreedy2023} described in Equation~\ref{eq:elreedy}. While enlightening, the expression in Equation~\ref{eq:elreedy} is not practical as it is expressed in the form of a double integral. 
In \cite{Sakho}, Sakho et al. proposed an alternative derivation of Equation~\ref{eq:elreedy} using random variables instead of geometrical arguments. Sakho et al. also studied the convergence of $Z$ to $X$ and showed that if $X$ has a bounded support, then $Z_{K,n}\mid X_c = x_c\to x_c$ in probability (Theorem 3.3~\cite{Sakho}). In \cite{Kamalov2024}, the authors improve the convergence result of \cite{Sakho} by showing that $Z$ converges to $X$ conditionally in mean, when $X$ is left-bounded.

Our research advances existing literature in two significant ways. First, we demonstrate that \( Z \) converges to \( X \) in probability without requiring any assumptions about the boundedness of the support. Second, we show that when the support is bounded, \( Z \) converges to \( X \) in mean, which is a stronger form of convergence than convergence in probability.

\section{Main Results}
\label{sec:main}
Let \( X \) be a real-valued random variable with cumulative distribution function \( F \) and probability density function \( f \). Consider an independent and identically distributed (i.i.d.) sample \( X_1, X_2, \dots, X_n \) drawn from \( X \). Let $Z$ be the random variable generated via the SMOTE-$k$ procedure, as described in Algorithm 1. We will show that $Z$ converges to $X$ in probability as $n\rightarrow \infty$.

\begin{algorithm}
\caption{SMOTE-$k$}
\label{alg:smote1}
\begin{algorithmic}[1]
\State \textbf{Input:} A sample $(X_1, X_2, \dots, X_n)$ and neighbor rank $1\leq k\leq n-1$
\State \textbf{Output:} A synthetic sample $Z$
\State Randomly choose an instance $X_i$ from the sample
\State Find the $k$-th nearest neighbor of $X_i$, denoted $X_{i,(k)}$
\State Generate a random number $\lambda \sim U(0, 1)$
\State Create a synthetic point $Z= X_i + \lambda (X_{i,(k)} - X_i)$
\State \textbf{Return} $Z$
\end{algorithmic}
\end{algorithm}

We note that there exists a more general version of the SMOTE-$k$ algorithm which will be discussed in Section~\ref{sec:gensmote}. Specifically, all the results established for the SMOTE-$k$ algorithm are also applicable to this broader algorithm.

\begin{theorem}
\label{convProb}
Let \( X \) be a continuous random variable. Let $Z$ be the random variable generated via the SMOTE-k procedure from an  i.i.d. sample  \( X_1, X_2, \dots, X_n \) drawn from \( X \). Then, $Z$ converges to $X$ in probability as $n\rightarrow \infty$.
\end{theorem}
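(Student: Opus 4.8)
The plan is to reduce the theorem to a statement about nearest-neighbour distances and then settle that statement with an elementary Binomial estimate, conditioned on the base point.

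First I would fix the coupling that makes the statement meaningful. Write $Z = Z_n = X_I + \lambda\,(X_{I,(k)} - X_I)$, where the index $I$ is drawn uniformly from $\{1,\dots,n\}$ independently of the sample and of $\lambda \sim U(0,1)$; then the base point $X := X_I$ has exactly the law of $X$, and with this identification
\[
  |Z_n - X| \;=\; \lambda\,|X_{I,(k)} - X_I| \;\le\; |X_{I,(k)} - X_I| \;=:\; R_n ,
\]
since $0 \le \lambda \le 1$. Hence it suffices to show $R_n \to 0$ in probability. By exchangeability of $(X_1,\dots,X_n)$ and independence of $I$, $R_n$ has the same law as $|X_1 - X_{1,(k)}|$, the distance from $X_1$ to its $k$-th nearest neighbour among $X_2,\dots,X_n$.

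Next, fix $\epsilon > 0$ and condition on $X_1 = x$. Since $R_n$ is the $k$-th smallest of the numbers $|X_2 - x|,\dots,|X_n - x|$, the event $\{R_n \le \epsilon\}$ is exactly the event that at least $k$ of these $n-1$ points fall in the closed ball $\bar B(x,\epsilon)$. Writing $p_\epsilon(x) := P\big(X \in \bar B(x,\epsilon)\big)$, this gives
\[
  P\big(R_n > \epsilon \mid X_1 = x\big)
  \;=\; P\big(\mathrm{Bin}(n-1,\,p_\epsilon(x)) \le k-1\big)
  \;=\; \sum_{j=0}^{k-1}\binom{n-1}{j}\,p_\epsilon(x)^{\,j}\,\big(1-p_\epsilon(x)\big)^{\,n-1-j}.
\]
For every $x$ in the support of $X$ one has $p_\epsilon(x) > 0$ straight from the definition of the support; with $p := p_\epsilon(x)$ fixed and positive, each of the finitely many terms tends to $0$ as $n \to \infty$ because the exponential factor $(1-p)^{n-1-j}$ overwhelms the polynomial factor $\binom{n-1}{j}$ (and if $p = 1$ the probability is identically $0$ once $n-1 \ge k$). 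As $P\big(X \in \mathrm{supp}(X)\big) = 1$, we conclude $P(R_n > \epsilon \mid X_1 = x) \to 0$ for $X_1$-almost every $x$.

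Finally I would integrate out the conditioning: the conditional probabilities lie in $[0,1]$, so dominated convergence yields $P(R_n > \epsilon) = \mathbb{E}\big[P(R_n > \epsilon \mid X_1)\big] \to 0$, and since $\epsilon > 0$ was arbitrary and $|Z_n - X| \le R_n$, the theorem follows. The real content sits in the single observation that a $\mathrm{Bin}(n-1,p)$ variable with $p$ fixed and positive eventually exceeds the fixed threshold $k-1$ with probability tending to $1$; continuity of $X$ enters only lightly, to keep the nearest-neighbour selection in Algorithm~\ref{alg:smote1} unambiguous. The only steps needing genuine care are expository — writing the coupling cleanly — together with the routine measure-theoretic bookkeeping of conditioning on $X_1$ through a regular conditional distribution; I do not expect a substantive obstacle.
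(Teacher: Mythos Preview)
Your proof is correct and follows essentially the same route as the paper: reduce to the $k$-th nearest-neighbour distance $R_n$, condition on the base point to get a Binomial tail $P(\mathrm{Bin}(n-1,p_\epsilon(x))\le k-1)$, and then pass to the unconditional probability. The only difference is in this last step: you use dominated convergence (pointwise convergence to $0$ for $x$ in the support, integrand bounded by $1$), whereas the paper truncates to a compact set $S$ on which $p_\epsilon$ is bounded below by some $p_{\min}>0$ and makes $P(X_1\notin S)<\delta$ --- your route is slightly cleaner, but the content is the same.
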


\begin{proof}
Let \( F \) be the continuous cumulative distribution function and \( f \) be the corresponding probability density function of $X$. Since  \( X_1, X_2, \dots, X_n \)  is an i.i.d. sample, we can assume without the loss of generality that $Z$ is generated between $X_1$ and its $k$-th nearest neighbor $X_{1,(k)}$. We will show that $Z$ converges to $X_1$ in probability, i.e., 
\[
\quad \lim_{n \to \infty} P\left( \left| Z - X_1 \right| > \epsilon \right) = 0, \,\forall \epsilon > 0.
\]
Since $|Z - X_1| \leq |X_{1,(k)} - X_1|$ then 
\[
P\left( \left| Z - X_1 \right| > \epsilon \right) \leq P\left( \left| X_{1,(k)} - X_1 \right| > \epsilon \right).
\]
Thus, it suffices to show that 
\[
\quad \lim_{n \to \infty} P\left( \left| X_{1,(k)} - X_1 \right| > \epsilon \right) = 0,  \,\forall \epsilon > 0.
\]
For each \( i \neq 1 \), define the distance:
\[
D_i = |X_i - X_1|.
\]
Arrange these distances in ascending order:
\[
D_{(1)} \leq D_{(2)} \leq \dots \leq D_{(n - 1)},
\]
so that \( D_{(k)} = |X_{1,(k)} - X_1| \) is the \( k \)-th smallest distance from \( X_1 \) to the other sample points.
Our objective is to estimate \( P(D_{(k)} \geq \varepsilon) \).

Let \( Y \) denote the number of sample points among \( X_2, X_3, \dots, X_n \) that are within \( \varepsilon \) of \( X_1 \):
\[
Y = \sum_{i=2}^{n} \mathbf{1}_{\{ D_i \leq \varepsilon \}},
\]
where \( \mathbf{1}_{\{ D_i \leq \varepsilon \}} \) is the indicator function that equals 1 if \( D_i \leq \varepsilon \) and 0 otherwise.
Given \( X_1 = x_1 \), each \( D_i \) is independent of the others, and the events \( \{ D_i \leq \varepsilon \} \) are independent Bernoulli trials with success probability:
\[
p(x_1) = P(D_i \leq \varepsilon \mid X_1 = x_1) = \int_{x_1 - \varepsilon}^{x_1 + \varepsilon} f(x) \, dx.
\]
Thus, conditional on \( X_1 = x_1 \), \( Y \) follows a binomial distribution:
\[
Y \mid X_1 = x_1 \sim \text{Binomial}(n - 1, p(x_1)).
\]
We can express the conditional probability \(P(D_{(k)} \geq \varepsilon \mid X_1)\) as:
 
\begin{equation}
\label{distk}
P(D_{(k)} \geq \varepsilon \mid X_1) =  P(Y \leq k - 1 \mid X_1).
\end{equation}
To find the unconditional probability, we take the expectation over \( X_1 \):
\begin{equation}
\label{uncond}
P(D_{(k)} \geq \varepsilon) = E_{X_1}\left[ P(Y \leq k - 1 \mid X_1) \right].
\end{equation}
Our goal is to show that this expectation tends to zero as \( n \to \infty \).
To this end, note that for any \( \delta > 0 \), since \( f \) is integrable, there exists a compact set \( S \subset \mathbb{R} \) such that:
\begin{equation}
\label{bound}
\int_{S^c} f(x_1) \, dx_1 < \delta.
\end{equation}
Since \( S \) is compact and \( p(x_1) \) is strictly positive,  there exist  \( p_{\min} \gneq 0\) such that 
\begin{equation}
\label{comp}
 p(x_1) \geq p_{\min}, \,\forall x_1 \in S.
\end{equation}
We use  \( S \) to split \( P(D_{(k)} \geq \varepsilon) \) into two parts to be handled separately:
\begin{equation}
\label{break}
P(D_{(k)} \geq \varepsilon) = P(D_{(k)} \geq \varepsilon, X_1 \in S) + P(D_{(k)} \geq \varepsilon, X_1 \in S^c).
\end{equation}

\subsubsection*{Case 1:  \( P(D_{(k)} \geq \varepsilon, X_1 \in S) \)}
Using Equation \ref{uncond} and \ref{comp}, we get:
\[
\begin{aligned}
&P(D_{(k)} \geq \varepsilon, X_1 \in S) = E_{X_1}\left[ P(Y \leq k - 1 \mid X_1  \in S) \right]\\
&= \int_{S}  P(Y \leq k - 1) f(x_1) \, dx_1\\
&=\sum_{j=0}^{k-1} \int_{S} \binom{n-1}{j} [p(x_1)]^j [1 - p(x_1)]^{n-1-j} f(x_1) \, dx_1\\
&\leq\sum_{j=0}^{k-1} (n-1)^{j} [1 -  p_{\min}]^{n-1-j} \cdot \int_{S} [p(x_1)]^j  f(x_1) \, dx_1\\
&\leq \sum_{j=0}^{k-1} (n-1)^{j} [1 -  p_{\min}]^{n-1-j}
\end{aligned}
\]

Since $(1 -  p_{\min} )<1$, then $(1 -  p_{\min})^{n-1-j}$ decreases exponentially,  as $n\to \infty$. On the other hand, $(n-1)^{j}$ increases in polynomial order. It follows that
\[
\lim_{n\to \infty} \sum_{j=0}^{k-1} (n-1)^{j} [1 -  f_{\min}]^{n-1-j}=0.
\]
Hence, 
\(
\lim_{n\to \infty}P(D_{(k)} \geq \varepsilon, X_1 \in S)=0.
\)

\subsubsection*{Case 2: \( P(D_{(k)} \geq \varepsilon, X_1 \in S^c) \)}
Using Equation \ref{bound}, we have:
\begin{align*}
  P(D_{(k)} \geq \varepsilon, X_1 \in S^c) &\leq P(X_1 \in S^c)\\
  &= \int_{S^c} f(x_1) \, dx_1 < \delta.  
\end{align*}

It follows from Equation \ref{break} that
\(
\lim_{n\to \infty}P(D_{(k)} \geq \varepsilon) < \delta.
\)
Since \( \delta > 0 \) is arbitrary we have:
\[
\lim_{n \to \infty} P(D_{(k)} \geq \varepsilon) = 0.
\]
Since \( |Z - X_1| \leq D_{(k)} \), then:
\[
P(|Z - X_1| > \varepsilon) \leq P(D_{(k)} \geq \varepsilon) \to 0 \quad \text{as} \quad n \to \infty.
\]
Therefore, \( Z \) converges in probability to \( X_1 \).
Given that \( X_1 \) is distributed as \( X \), and convergence in probability is preserved under convergence to a random variable, we conclude that:
\[
\lim_{n \to \infty} P(|Z - X| > \varepsilon) = 0.
\]
\end{proof}

To validate Theorem~\ref{convProb} on real-world data, we analyzed carbon monoxide readings from the UCI Air Quality dataset \cite{UCIvito}, treating the full series as the ground truth $X$. We generated synthetic samples $Z$ from subsets of varying sizes $n$ using neighbor ranks $k=1$ and $k=5$, measuring the Kolmogorov-Smirnov (KS) distance to $X$. As shown in Figure~\ref{fig:exp1CO}, the error decreases monotonically as $n$ increases, confirming convergence in probability. Moreover, $k=1$ yields consistently lower KS statistics than $k=5$ at smaller sample sizes, empirically supporting our theoretical conclusion that lower neighbor ranks lead to faster convergence.

\begin{figure*}[!h]
         \centering
         \includegraphics[width=\textwidth, trim=0 0 0 1.3cm, clip]{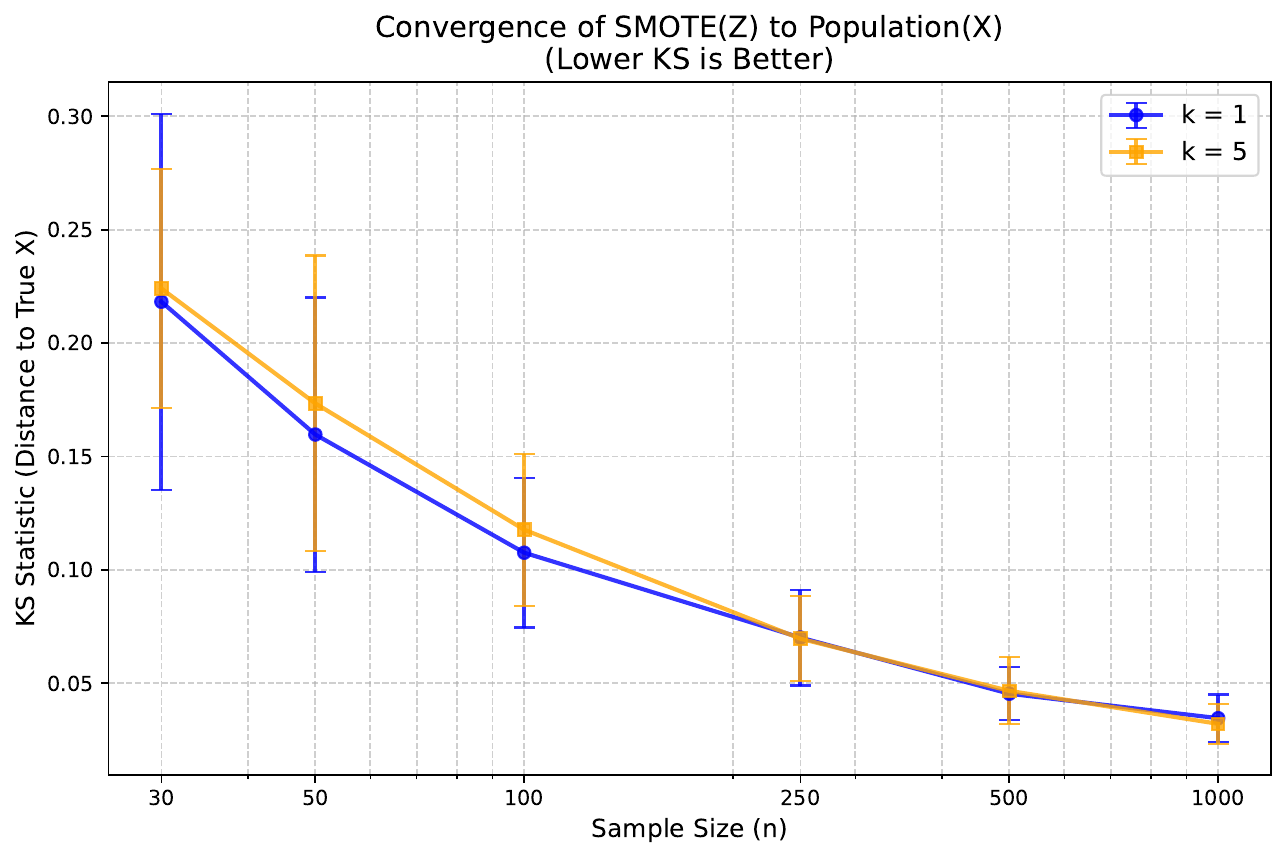}
        \caption{Convergence of SMOTE-generated samples $Z$ to the true population $X$ (UCI Air Quality CO data) as measured by the KS statistic. The downward trend confirms convergence in probability (Theorem~\ref{convProb}). Lower values of $k$ result in faster convergence.}
        \label{fig:exp1CO}
\end{figure*}

In the next theorem, we consider a continuous random variable with compact support. We show that the expected difference between any two consecutive order statistics tends to zero uniformly  as the sample size increases. As a consequence (Theorem \ref{thm:convmean}), we will obtain that $Z$ converges to $X$ in mean. Before proceeding further, we need the following auxiliary lemma whose proof is trivial.

\begin{lemma}
\label{conditionalExpect}
Let \( X \) and \( Y \) be random variables. The expectation of \( X \) can be expressed in terms of conditional expectations as
\[
E[X] = \sum_y E[X \mid Y = y] \cdot f_Y(y).
\]
\end{lemma}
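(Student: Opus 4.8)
The plan is to identify this as the law of total expectation (the tower property) specialized to a discrete conditioning variable $Y$, with $f_Y$ read as the probability mass function of $Y$, and to derive it directly from the definition of expectation.

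First I would expand the conditional expectation: for each $y$ in the support of $Y$, write $E[X \mid Y = y] = \int_{\mathbb{R}} x\, f_{X\mid Y}(x \mid y)\, dx$. Multiplying by $f_Y(y)$ and using the factorization $f_{X\mid Y}(x \mid y)\, f_Y(y) = f_{X,Y}(x,y)$, the $y$-th summand becomes $\int_{\mathbb{R}} x\, f_{X,Y}(x,y)\, dx$. Summing over $y$ and interchanging the sum with the integral yields $\int_{\mathbb{R}} x \left( \sum_y f_{X,Y}(x,y) \right) dx = \int_{\mathbb{R}} x\, f_X(x)\, dx = E[X]$, since summing the joint density over $y$ recovers the marginal $f_X$. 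An equivalent one-line route: $E[X] = E\big[ E[X \mid Y] \big]$ by the tower property, and the outer expectation of the discrete random variable $E[X \mid Y]$ is by definition $\sum_y E[X \mid Y = y]\, P(Y = y)$, which is exactly the claimed identity.

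The only point that needs care — and it is minor — is the interchange of summation and integration, which is licensed by Fubini--Tonelli whenever $X$ is nonnegative or satisfies $E[|X|] < \infty$. In every subsequent application of this lemma, $X$ will be compactly supported, hence bounded and integrable, so this hypothesis holds automatically; this is presumably why the authors describe the proof as trivial.
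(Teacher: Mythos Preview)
Your proposal is correct: this is precisely the law of total expectation, and your derivation via the factorization $f_{X\mid Y}(x\mid y)\,f_Y(y)=f_{X,Y}(x,y)$ followed by marginalization (or equivalently the one-line tower-property argument) is the standard route. The paper itself does not give a proof at all, merely stating that it is ``trivial,'' so your write-up is entirely consistent with the authors' intent and fills in exactly what they omitted.
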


\begin{theorem}
\label{meanConverge}
Let \( X \) be a continuous random variable with finite support \([a, b]\).  Let \( X_1, X_2, \dots, X_n \) be independent and identically distributed (i.i.d.) samples from \( X \), and let \( X_{(1)} \leq X_{(2)} \leq \dots \leq X_{(n)} \) denote their order statistics.
Then $\lim_{n\to\infty}E[X_{(k+1)} - X_{(k)}] =0$ uniformly.
\end{theorem}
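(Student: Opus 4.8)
The plan is to reduce the statement, via the probability integral transform, to a fact about uniform order statistics --- where the expected spacing has the clean value $1/(n+1)$ --- and then transport this back through the quantile function $Q=F^{-1}$, whose \emph{uniform} continuity on $[0,1]$ is the structural ingredient supplied by compactness of the support.

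First I would record that, since $[a,b]$ is the support of a continuous random variable with density $f$, the restriction $F\colon[a,b]\to[0,1]$ is a continuous, strictly increasing bijection: continuity is immediate because $X$ is continuous, surjectivity holds because $F(a)=0$ and $F(b)=1$, and strict monotonicity follows since an interval on which $F$ were constant would carry zero probability and hence could not meet the support. Therefore the quantile function $Q:=F^{-1}\colon[0,1]\to[a,b]$ is well defined, non-decreasing and continuous, and being continuous on the compact interval $[0,1]$ it is uniformly continuous.

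Next, set $U_i:=F(X_i)$, so that $U_1,\dots,U_n$ are i.i.d.\ $\mathrm{Uniform}(0,1)$ and $X_{(k)}=Q(U_{(k)})$ for every $k$. Using the elementary identity $E[U_{(k)}]=k/(n+1)$ for uniform order statistics,
\[
E\bigl[U_{(k+1)}-U_{(k)}\bigr]=\frac{1}{n+1}\qquad\text{for every }k=1,\dots,n-1,
\]
a bound independent of $k$. Now fix $\eta>0$ and take $\delta>0$ from the uniform continuity of $Q$ so that $|u-v|<\delta$ implies $|Q(u)-Q(v)|<\eta$. Splitting the expectation over $\{U_{(k+1)}-U_{(k)}<\delta\}$ and its complement and bounding $Q(U_{(k+1)})-Q(U_{(k)})\le b-a$ on the latter,
\[
E\bigl[X_{(k+1)}-X_{(k)}\bigr]\;\le\;\eta\;+\;(b-a)\,P\bigl(U_{(k+1)}-U_{(k)}\ge\delta\bigr)\;\le\;\eta\;+\;\frac{b-a}{\delta\,(n+1)},
\]
by Markov's inequality and the spacing identity. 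As the right side does not depend on $k$, letting $n\to\infty$ gives $\limsup_{n\to\infty}\sup_{1\le k\le n-1}E[X_{(k+1)}-X_{(k)}]\le\eta$, and sending $\eta\downarrow0$ yields the claimed uniform convergence to $0$.

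The main obstacle --- and the only place the compactness hypothesis is genuinely used --- is upgrading continuity of $Q$ to \emph{uniform} continuity; without a bounded support $Q$ may be merely continuous (or unbounded), and the final estimate collapses. A more computational route would instead start from $E[X_{(k+1)}-X_{(k)}]=\int_a^b\binom{n}{k}F(t)^k(1-F(t))^{n-k}\,dt$ and try to bound the binomial point mass uniformly in $k$, but controlling this integrand near the endpoints of $[a,b]$, where $F(t)$ is close to $0$ or $1$, is delicate, so I would prefer the transform argument above.
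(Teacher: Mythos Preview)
Your proof is correct and follows essentially the same approach as the paper: the probability integral transform $U_i=F(X_i)$, the uniform-spacing identity $E[U_{(k+1)}-U_{(k)}]=1/(n+1)$, the split of the expectation according to whether $U_{(k+1)}-U_{(k)}<\delta$, and Markov's inequality on the complementary event, all appear in the paper in the same order. Your write-up is in fact slightly cleaner in justifying the strict monotonicity of $F$ and in tracking that the final bound is independent of $k$.
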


\begin{proof}
Assume without the loss of generality that the cumulative distribution function \( F \)  is continuous and strictly increasing on \([a, b]\). Our goal is to show that for any \( \varepsilon > 0 \), there exists an \( N \in \mathbb{N} \) such that for all \( n > N \) and all \( 1 \leq k \leq n - 1 \),
\[
E[X_{(k+1)} - X_{(k)}] < \varepsilon.
\]
Since \( X \) has support \([a, b]\), then \( F \) is a bijection from \([a, b]\) to \([0, 1]\) and its inverse function \( F^{-1} \) exists and is continuous and strictly increasing on \([0, 1]\). Moreover, both \( F \) and \( F^{-1} \) are uniformly continuous as they are continuous functions on compact intervals.
Define
\[
U_i = F(X_i), \quad \text{for } i = 1, 2, \dots, n.
\]
Since \( X_i \) are i.i.d. samples from \( X \), the \( U_i \) are i.i.d. samples from the uniform distribution on \([0, 1]\). Let \( U_{(1)} \leq U_{(2)} \leq \dots \leq U_{(n)} \) denote the order statistics of the \( U_i \). 
The expected difference between consecutive order statistics for the uniform distribution on \([0, 1]\) is known \cite{Kamalov2024}:
\begin{equation}
\label{expectedDiff}
E[U_{(k+1)} - U_{(k)}] = \frac{1}{n + 1}, \quad \text{for } k = 1, 2, \dots, n - 1.
\end{equation}
Since \( F^{-1} \) is uniformly continuous, for any \( \varepsilon > 0 \), there exists a \( \delta > 0 \) such that for all \( u, v \in [0, 1] \),
\[
|u - v| < \delta \implies |F^{-1}(u) - F^{-1}(v)| < \varepsilon.
\]
We use Lemma \ref{conditionalExpect} to decompose \( E[X_{(k+1)} - X_{(k)}] \) based on whether \( U_{(k+1)} - U_{(k)} \) is less than \( \delta \):
\begin{align*}
E[X_{(k+1)} - X_{(k)}] &= E\left[ (X_{(k+1)} - X_{(k)}) \cdot \mathbb{I}\left( U_{(k+1)} - U_{(k)} < \delta \right) \right] \\
&\quad + E\left[ (X_{(k+1)} - X_{(k)}) \cdot \mathbb{I}\left( U_{(k+1)} - U_{(k)} \geq \delta \right) \right],
\end{align*}
where \( \mathbb{I}(\cdot) \) is the indicator function.

\subsection*{Case \( U_{(k+1)} - U_{(k)} < \delta \):}

When \( U_{(k+1)} - U_{(k)} < \delta \), the uniform continuity of \( F^{-1} \) implies
\[
|X_{(k+1)} - X_{(k)}| = |F^{-1}(U_{(k+1)}) - F^{-1}(U_{(k)})| < \varepsilon.
\]
Therefore,
\[
(X_{(k+1)} - X_{(k)}) \cdot \mathbb{I}\left( U_{(k+1)} - U_{(k)} < \delta \right) \leq \varepsilon \cdot \mathbb{I}\left( U_{(k+1)} - U_{(k)} < \delta \right).
\]
Taking expectations,
\begin{align*}
    &E\left[ (X_{(k+1)} - X_{(k)}) \cdot \mathbb{I}\left( U_{(k+1)} - U_{(k)} < \delta \right) \right]\\
    &\leq \varepsilon \cdot P\left( U_{(k+1)} - U_{(k)} < \delta \right)\\
    &\leq \varepsilon.
\end{align*}

\subsection*{Case \( U_{(k+1)} - U_{(k)} \geq \delta \):}

Since \( X_{(k)} \) and \( X_{(k+1)} \) are in \([a, b]\), the maximum possible difference is
\[
0 \leq X_{(k+1)} - X_{(k)} \leq b - a.
\]
Thus,
\begin{align*}
    &E\left[ (X_{(k+1)} - X_{(k)}) \cdot \mathbb{I}\left( U_{(k+1)} - U_{(k)} \geq \delta \right) \right]\\
    &\leq (b - a) \cdot P\left( U_{(k+1)} - U_{(k)} \geq \delta \right).
\end{align*}

To bound \( P\left( U_{(k+1)} - U_{(k)} \geq \delta \right) \), we apply Markov's inequality to the random variable \( U_{(k+1)} - U_{(k)} \). Recall that Markov's inequality states that for any non-negative random variable \( X \) and any \( t > 0 \),
\[
P(X \geq t) \leq \frac{E[X]}{t}.
\]
Applying this to \( U_{(k+1)} - U_{(k)} \) with \( t = \delta \),
\[
P\left( U_{(k+1)} - U_{(k)} \geq \delta \right) \leq \frac{E[U_{(k+1)} - U_{(k)}]}{\delta}.
\]
It follows  from Equation \ref{expectedDiff} that 
\[
P\left( U_{(k+1)} - U_{(k)} \geq \delta \right) \leq \frac{1}{n \delta}.
\]
Let \( n \) be large enough such that
\[
\frac{1}{n \delta} \leq \frac{\varepsilon}{2(b - a)}.
\]
Then
\[
(b - a) \cdot P\left( U_{(k+1)} - U_{(k)} \geq \delta \right) \leq \frac{\varepsilon}{2}. 
\]
It follows that 
\begin{align*}
&E[X_{(k+1)} - X_{(k)}]\\
&= E\left[ (X_{(k+1)} - X_{(k)}) \cdot \mathbb{I}\left( U_{(k+1)} -
 U_{(k)} < \delta \right) \right] \\
&\quad + E\left[ (X_{(k+1)} - X_{(k)}) \cdot \mathbb{I}\left( U_{(k+1)} - U_{(k)} \geq \delta \right) \right] \\
&\leq \varepsilon + (b - a) \cdot \frac{\varepsilon}{2(b - a)} = \varepsilon + \frac{\varepsilon}{2} = \frac{3\varepsilon}{2},
\end{align*}
for all \( n > N \) and all \( 1 \leq k \leq n - 1 \), as desired.
\end{proof}

We can generalize Theorem \ref{meanConverge} to the expected difference between any order statistics.

\begin{corollary}
\label{kConv}
In the same scenario as in Theorem \ref{meanConverge}, let $1\leq m \leq n-1$.
Then $\lim_{n\to\infty}E[X_{(k+m)} - X_{(k)}] =0$ uniformly, for all $1\leq k \leq n-m$.
\end{corollary}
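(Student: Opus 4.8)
The plan is to reduce the corollary to Theorem~\ref{meanConverge} by a telescoping argument. For any realization of the order statistics and any $1 \le k \le n-m$ we may write
\[
X_{(k+m)} - X_{(k)} = \sum_{j=0}^{m-1} \left( X_{(k+j+1)} - X_{(k+j)} \right),
\]
since every intermediate order statistic cancels. Taking expectations and using linearity of expectation gives $E[X_{(k+m)} - X_{(k)}] = \sum_{j=0}^{m-1} E[X_{(k+j+1)} - X_{(k+j)}]$.

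Next I would invoke Theorem~\ref{meanConverge}: given $\varepsilon > 0$, there is an $N$ (independent of the index) such that $E[X_{(i+1)} - X_{(i)}] < \varepsilon/m$ for all $n > N$ and all $1 \le i \le n-1$. Since $k \le n-m$ forces every index $k+j$ with $0 \le j \le m-1$ appearing in the sum to satisfy $1 \le k+j \le n-1$, each of the $m$ summands is bounded by $\varepsilon/m$, hence $E[X_{(k+m)} - X_{(k)}] < \varepsilon$ for all $n > N$ and all $1 \le k \le n-m$. This is precisely the claimed uniform convergence.

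An alternative and essentially equivalent route is to repeat the proof of Theorem~\ref{meanConverge} verbatim with the gap $U_{(k+m)} - U_{(k)}$ in place of $U_{(k+1)} - U_{(k)}$, using the identity $E[U_{(k+m)} - U_{(k)}] = m/(n+1)$ for the uniform order statistics. The uniform-continuity step is unchanged, and the Markov step then yields $P(U_{(k+m)} - U_{(k)} \ge \delta) \le m/((n+1)\delta)$, which still tends to $0$ for fixed $m$.

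The only point to watch is the dependence on $m$: the telescoping bound degrades linearly in $m$ (equivalently, the Markov bound in the second route carries a factor $m$), so the argument as stated requires $m$ to be held fixed as $n \to \infty$; more generally $m = o(n)$ would already suffice in the second route. Under the convention of the statement that $m$ is a fixed parameter, there is no genuine obstacle — all the analytic content is supplied by Theorem~\ref{meanConverge}, and the remaining steps are the routine telescoping identity and an $\varepsilon/m$ bookkeeping choice.
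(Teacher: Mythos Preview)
Your proposal is correct and is essentially identical to the paper's own proof: the paper also telescopes $E[X_{(k+m)} - X_{(k)}]$ into $m$ consecutive gaps and applies Theorem~\ref{meanConverge} with the $\varepsilon/m$ bookkeeping. Your additional remarks (the alternative direct route via $E[U_{(k+m)}-U_{(k)}]=m/(n+1)$ and the caveat that $m$ must be held fixed) go slightly beyond what the paper states but are accurate and harmless.
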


\begin{proof}
Let $\varepsilon >0$ be given. By Theorem \ref{meanConverge}, there exists an \( N \in \mathbb{N} \) such that for all \( n > N \) and all \( 1 \leq k \leq n - 1 \),
\[
E[X_{(k+1)} - X_{(k)}] < \frac{\varepsilon}{m}.
\]
It follows that 
\begin{align*}
E[X_{(k+m)} - X_{(k)}] &=\sum_{i=k}^{m-1} E[X_{(i+1)} - X_{(i)}]\\
&\leq m\cdot  \frac{\varepsilon}{m} = \varepsilon.
\end{align*}
\end{proof}

In Theorem \ref{convProb}, we showed that $X$ converges to $Z$ in probability. Now, under additional condition of bounded support of $X$, we obtain a stronger result that $X$ converges to $Z$ in mean.

\begin{theorem}
\label{thm:convmean}
Let \( X \) be a continuous random variable with compact support. Let $Z$ be the random variable generated via the SMOTE-k procedure from an  i.i.d. sample  \( X_1, X_2, \dots, X_n \) drawn from \( X \). Then, $Z$ converges to $X$ in mean as $n\rightarrow \infty$.
\end{theorem}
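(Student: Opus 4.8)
The plan is to reduce the claim to the uniform order-statistic estimate of Corollary~\ref{kConv}. Let $[a,b]$ denote the compact support of $X$. Arguing exactly as at the start of the proof of Theorem~\ref{convProb}, we may assume without loss of generality that $Z$ is generated between $X_1$ and its $k$-th nearest neighbor $X_{1,(k)}$, so that $|Z-X_1|\le |X_{1,(k)}-X_1|$ pointwise. Since convergence in mean here means $E\big[|Z-X_1|\big]\to 0$ with $X_1$ a copy of $X$ (the same reading of ``convergence to $X$'' used in Theorem~\ref{convProb}), it suffices to prove
\[
\lim_{n\to\infty} E\big[\,|X_{1,(k)}-X_1|\,\big]=0 .
\]

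First I would condition on the rank of $X_1$ inside the sample. By exchangeability of $X_1,\dots,X_n$ the event $A_j=\{X_1=X_{(j)}\}$ has probability $1/n$, and since the order statistics are a symmetric function of the sample, conditioning on $A_j$ does not change their joint law. The key elementary observation is that, almost surely (no ties), the $k$-th nearest neighbor in value of $X_{(j)}$ must be one of $X_{(j-k)},\dots,X_{(j+k)}$: if $i>j+k$ then $X_{(i)}\ge X_{(j+\ell)}\ge X_{(j)}$ for $\ell=1,\dots,k$, so the $k$ points $X_{(j+1)},\dots,X_{(j+k)}$ all lie at distance at most $|X_{(i)}-X_{(j)}|$ from $X_{(j)}$, forcing $X_{(i)}$ to have neighbor-rank at least $k+1$; the case $i<j-k$ is symmetric. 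Hence, on $A_j$,
\[
|X_{1,(k)}-X_1|\ \le\ X_{(\min(j+k,\,n))}-X_{(\max(j-k,\,1))}.
\]

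Next I would apply Lemma~\ref{conditionalExpect} to write $E\big[|X_{1,(k)}-X_1|\big]=\tfrac1n\sum_{j=1}^{n} E\big[|X_{1,(k)}-X_1|\ \big|\ A_j\big]$ and bound each summand by $E\big[X_{(\min(j+k,n))}-X_{(\max(j-k,1))}\big]$. For the interior indices $k<j\le n-k$ this equals $E\big[X_{(j+k)}-X_{(j-k)}\big]$, which by Corollary~\ref{kConv} applied with $m=2k$ falls below any prescribed $\varepsilon$ once $n$ is large, uniformly in $j$. For the remaining $2k$ boundary indices I would use the trivial bound $X_{(\min(j+k,n))}-X_{(\max(j-k,1))}\le b-a$, so their contribution to the average is at most $\tfrac{2k}{n}(b-a)\to 0$. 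Combining the two estimates yields $E\big[|Z-X_1|\big]\le E\big[|X_{1,(k)}-X_1|\big]\to 0$, which is the asserted convergence in mean.

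The step I expect to be the main obstacle is the conditioning reduction: one must justify carefully that conditioning on which index realizes the $j$-th order statistic genuinely leaves the joint distribution of $X_{(1)},\dots,X_{(n)}$ unchanged (this is exactly where exchangeability is used), and one must peel off the boundary ranks $j\le k$ and $j>n-k$ by hand, since Corollary~\ref{kConv} only controls $E[X_{(k+m)}-X_{(k)}]$ in the range $1\le k\le n-m$. The geometric claim locating the $k$-th nearest neighbor among the $2k$ index-adjacent order statistics also deserves a short but careful justification. Everything else is routine bookkeeping with the law of total expectation and the bounded-support hypothesis.
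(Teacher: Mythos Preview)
Your proposal is correct and takes essentially the same approach as the paper: reduce to showing $E[|X_{1,(k)}-X_1|]\to 0$ and control this via the uniform spacing estimate of Corollary~\ref{kConv}. The paper's proof is terser---it simply asserts $E[|X_{1,(m)}-X_1|]\le\sup_{1\le j\le n-m}E[X_{(j+m)}-X_{(j)}]$ without spelling out the rank-conditioning or the boundary indices, and it implicitly uses the sharper one-sided bound $|X_{1,(m)}-X_{(j)}|\le X_{(j+m)}-X_{(j)}$ (so it invokes the corollary with width $m$ rather than your $2k$); your version fills in exactly the details the paper glosses over.
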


\begin{proof}
Since  \( X_1, X_2, \dots, X_n \)  is an i.i.d. sample, we can assume without the loss of generality that $Z$ is generated between $X_1$ and its $m$-th nearest neighbor $X_{1,(m)}$. We will show that $Z$ converges to $X_1$ in mean, i.e., 
\[
\lim_{n\to\infty}E\left[ \left|Z - X_{1}\right| \right] =0.
\]
Let $\varepsilon >0$ be given. By Corollary \ref{kConv}, there exists an \( N \in \mathbb{N} \) such that for all \( n > N \) and all $1\leq k \leq n-m$,
\[
E[X_{(k+m)} - X_{(k)}] < \varepsilon.
\]
Then,
\begin{align*}
E[|Z - X_{1}|] &\leq E[|X_{1,(m)} - X_{1}|]\\
&\leq \sup_{1\leq k \leq n-m} E[X_{(k+m)} - X_{(k)}] \\
&\leq \varepsilon
\end{align*}
which completes the proof.
\end{proof}

To verify Theorem~\ref{thm:convmean}, we analyzed the "median income" feature from the California Housing dataset \cite{UCIpace}. The data is normalized to $[0, 1]$ to satisfy the assumption of compact support. We calculated the Wasserstein distance as suitable proxy for convergence in mean between the synthetic samples $Z$ and the true population $X$. As illustrated in Figure~\ref{fig:exp2_cali}, the distance decreases monotonically as the sample size $n$ increases. It empirically confirms that the synthetic distribution approaches the original distribution in mean.

\begin{figure*}[!h]
         \centering
         \includegraphics[width=\textwidth, trim=0 0 0 1.28cm, clip]{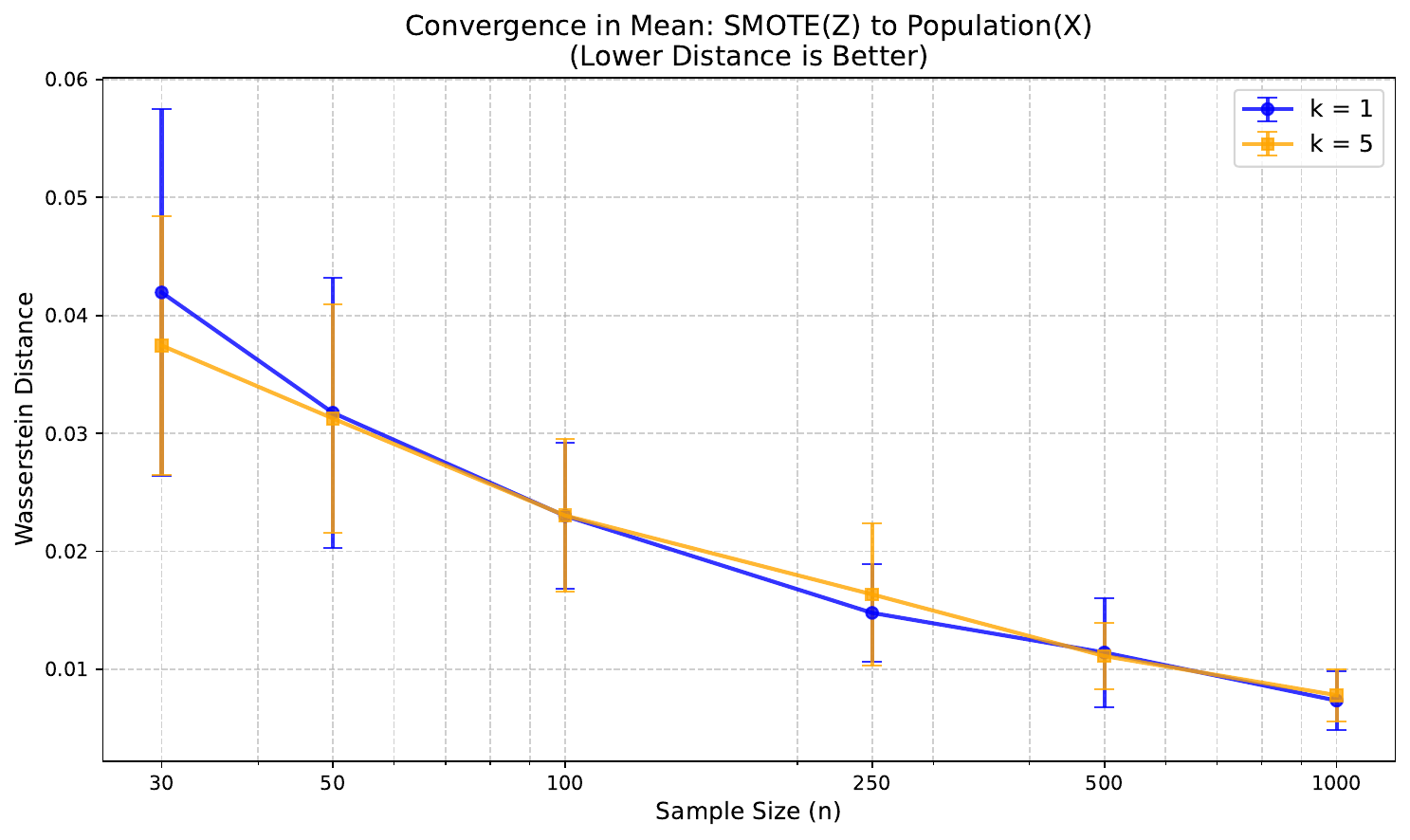}
        \caption{Convergence in mean of SMOTE-generated samples $Z$ to the true population $X$ (California Housing Median Income) as measured by the Wasserstein distance. The decreasing trend empirically validates Theorem~\ref{thm:convmean}.}
        \label{fig:exp2_cali}
\end{figure*}

\subsection{General SMOTE}
\label{sec:gensmote}
In this section, we broaden our main results related to converegence to a more general version of SMOTE.
The general SMOTE algorithm is an extension of Algorithm~\ref{alg:smote1}. The details of the general procedure are presented in Algorithm~\ref{alg:smotek}:
\begin{algorithm}
\caption{SMOTE-$K$}
\label{alg:smotek}
\begin{algorithmic}[1]
\State \textbf{Input:} A sample $(X_1, X_2, \dots, X_n)$ and rank $1\leq K\leq n-1$
\State \textbf{Output:} A synthetic sample $Z$
\State Randomly choose an instance $X_i$ from the sample
\State Find the $K$ nearest neighbors of $X_i$, denoted $(X_{i,(1)}, X_{i,(2)},... X_{i,(K)})$
\State Randomly choose a nearest neighbor $X_{i,(k)}$ from the previous step.
\State Generate a random number $\lambda \sim U(0, 1)$
\State Create a synthetic point $Z= X_i + \lambda (X_{i,(k)} - X_i)$
\State \textbf{Return} $Z$
\end{algorithmic}
\end{algorithm}

Let $Z_k$ and $Z_K$ denote the SMOTE variables generated via SMOTE-$k$ and SMOTE-$K$ procedures, respectively. Then,
$$Z_K = \frac{1}{K}\sum_{k=1}^K Z_k.$$ 
First, we extend the result about the convergence in probability from $Z_k$ to $Z_K$.

\begin{corollary}
\label{convProbK}
Let \( X \) be a continuous random variable. Let $Z$ be the random variable generated via the SMOTE-K procedure from an  i.i.d. sample  \( X_1, X_2, \dots, X_n \) drawn from \( X \). Then, $Z$ converges to $X$ in probability as $n\rightarrow \infty$.
\end{corollary}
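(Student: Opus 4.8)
The plan is to reduce the statement to Theorem~\ref{convProb} by exploiting the mixture representation $Z_K = \frac{1}{K}\sum_{k=1}^{K} Z_k$, which expresses the fact that $Z_K$ is distributed as $Z_k$ with the rank $k$ drawn uniformly from $\{1,\dots,K\}$. Fixing $\varepsilon>0$ and conditioning on the randomly selected rank, one obtains
\[
P(|Z_K - X| > \varepsilon) = \frac{1}{K}\sum_{k=1}^{K} P(|Z_k - X| > \varepsilon).
\]
Since $K$ is a fixed parameter that does not grow with $n$, this is a finite sum, and Theorem~\ref{convProb} gives $P(|Z_k - X| > \varepsilon)\to 0$ as $n\to\infty$ for each of the finitely many $k\in\{1,\dots,K\}$; hence the average tends to $0$.

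An alternative route, which I would probably write out since it avoids invoking the mixture identity and instead re-uses the core estimate already proved for Theorem~\ref{convProb}, is to argue directly. As in that proof, assume without loss of generality that $Z$ is built from $X_1$ and one of its $K$ nearest neighbours $X_{1,(k)}$ with $k\le K$, and let $D_{(1)}\le D_{(2)}\le\dots\le D_{(n-1)}$ be the ordered distances from $X_1$ to the remaining sample points. Regardless of which of the $K$ neighbours is selected, $|Z - X_1| \le |X_{1,(k)} - X_1| = D_{(k)} \le D_{(K)}$, because the ordered distances are non-decreasing in the rank. Therefore $P(|Z - X_1| > \varepsilon) \le P(D_{(K)} \ge \varepsilon)$, and the proof of Theorem~\ref{convProb} already shows $\lim_{n\to\infty} P(D_{(k)} \ge \varepsilon) = 0$ for every fixed rank, in particular for $k=K$. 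Since $X_1$ is distributed as $X$, convergence in probability of $Z$ to $X$ follows.

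There is essentially no serious obstacle here; the only point to be careful about is that $K$ is held fixed while $n\to\infty$ (the constraint $1\le K\le n-1$ merely makes SMOTE-$K$ well defined for a given $n$), so the averaging or union bound over the $K$ cases is taken over a fixed finite index set and the rate from Theorem~\ref{convProb} applies uniformly across it. If one wished to let $K=K_n$ grow with $n$, the second argument would still go through provided $P(D_{(K_n)} \ge \varepsilon)\to 0$, which would require revisiting the Case~1 estimate with $K_n$ in place of $k$; but that lies outside the scope of the stated corollary.
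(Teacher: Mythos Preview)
Your first approach---conditioning on the randomly selected rank to obtain $P(|Z_K-X|>\varepsilon)=\frac{1}{K}\sum_{k=1}^{K}P(|Z_k-X|>\varepsilon)$ and then invoking Theorem~\ref{convProb} term by term---is essentially the paper's proof; the paper phrases the same reduction via the identity $Z_K=\frac{1}{K}\sum_{k=1}^K Z_k$ and a triangle-inequality display, but the substance is identical.

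Your second argument is a genuinely different route. Rather than averaging over the uniformly chosen rank, you bound the deviation by the worst case $|Z-X_1|\le D_{(K)}$ and appeal once to the already-established fact $P(D_{(K)}\ge\varepsilon)\to 0$. This is slightly more direct: it needs only the single estimate for rank $K$ rather than the full mixture decomposition, and it makes transparent that the result would persist even if the neighbour were chosen non-uniformly among the first $K$. The paper's route, on the other hand, keeps the argument at the level of the synthetic variable $Z_k$ rather than reaching back into the distance-order-statistic machinery, which makes it self-contained given only the statement (not the proof) of Theorem~\ref{convProb}. Both are correct; your remark about $K$ being held fixed is the right caveat in either version.
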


\begin{proof}
Let $\varepsilon$ and $\delta$ be given. We know from Theorem~\ref{convProb} that for each $1\leq k \leq K$, there exists $N_k$ such that  \( P\left( \left| Z_k - X_1 \right| > \delta \right) <\varepsilon, \,\forall n > N_k\). Let $N=\sup_{N_1, N_2, ..., N_K}$. Then, by the triangle inequality

\begin{align*}
P\left( \left| Z_K - X_1 \right| > \delta \right)
&=P\left( \left| \left(\frac{1}{K}\sum_{k=1}^K Z_k\right) - X_1 \right| > \delta \right)\\
&\leq P\left( \frac{1}{K}\sum_{k=1}^K \left| Z_k - X_1 \right| > \delta \right)\\
&= \frac{1}{K}\sum_{k=1}^K P\left(  \left| Z_k - X_1 \right| > \delta \right)\\
&\leq\varepsilon,
\end{align*}
for all $n\geq N$.
\end{proof}

Next, we extend the result about the convergence in mean from $Z_k$ to $Z_K$. 
\begin{corollary}
\label{convMeanK}
Let \( X \) be a continuous random variable with compact support. Let $Z$ be the random variable generated via the SMOTE-K procedure from an  i.i.d. sample  \( X_1, X_2, \dots, X_n \) drawn from \( X \). Then, $Z$ converges to $X$ in mean as $n\rightarrow \infty$.
\end{corollary}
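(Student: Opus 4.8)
The plan is to mirror the proof of Corollary~\ref{convProbK}, but replace the probability bound with a bound on the first moment and invoke Theorem~\ref{thm:convmean} in place of Theorem~\ref{convProb}. As in the earlier proofs, since the sample is i.i.d.\ we may assume without loss of generality that $Z_K$ is generated from $X_1$ and its $K$ nearest neighbors, so that $Z_K = \frac{1}{K}\sum_{k=1}^{K} Z_k$ with each $Z_k = X_1 + \lambda_k (X_{1,(k)} - X_1)$. Because $X_1$ is distributed as $X$ and convergence in mean is preserved under this identification, it suffices to show $\lim_{n\to\infty} E\big[\,|Z_K - X_1|\,\big] = 0$.

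First I would apply the triangle inequality inside the expectation and linearity of expectation:
\[
E\big[\,|Z_K - X_1|\,\big] = E\!\left[\left|\frac{1}{K}\sum_{k=1}^{K} (Z_k - X_1)\right|\right] \le \frac{1}{K}\sum_{k=1}^{K} E\big[\,|Z_k - X_1|\,\big].
\]
Then, given $\varepsilon > 0$, Theorem~\ref{thm:convmean} supplies for each $k \in \{1,\dots,K\}$ an index $N_k$ with $E\big[\,|Z_k - X_1|\,\big] < \varepsilon$ for all $n > N_k$. Setting $N = \max\{N_1,\dots,N_K\}$ yields $E\big[\,|Z_K - X_1|\,\big] < \frac{1}{K}\sum_{k=1}^{K}\varepsilon = \varepsilon$ for all $n > N$, which is exactly convergence in mean.

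The argument is short and the main obstacle is merely bookkeeping: one must check that the compact-support machinery behind Theorem~\ref{thm:convmean} (ultimately Corollary~\ref{kConv}, which is uniform in the rank) applies to every summand $Z_k$, and that taking a finite average over the fixed value $K$ cannot inflate the bound. Since $K$ is treated as fixed and finite, no uniformity in $K$ is required and the finite maximum $N$ is well defined; I do not anticipate any genuine difficulty beyond assembling these pieces.
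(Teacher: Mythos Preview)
Your argument is correct and is essentially identical to the paper's own proof: both use the decomposition $Z_K=\frac{1}{K}\sum_{k=1}^K Z_k$, bound $E[|Z_K-X_1|]\le \frac{1}{K}\sum_{k=1}^K E[|Z_k-X_1|]$ via the triangle inequality, and then invoke Theorem~\ref{thm:convmean} for each fixed $k$ with $N=\max_k N_k$. The only cosmetic difference is that you spell out the uniformity remark about Corollary~\ref{kConv}, which the paper leaves implicit.
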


\begin{proof}
Let $\varepsilon$ be given. We know from Theorem~\ref{thm:convmean} that for each $1\leq k \leq K$, there exists $N_k$ such that
\(E\left[ \left|Z_k - X_{1}\right| \right] \leq \varepsilon , \,\forall n > N_k.\)
Let $N=\sup_{N_1, N_2, ..., N_K}$. Then, by the triangle inequality

\begin{align*}
E\left[ \left|Z_K - X_{1}\right| \right]
&=E\left[ \left| \left(\frac{1}{K}\sum_{k=1}^K Z_k\right) - X_{1}\right| \right]\\
&\leq \frac{1}{K}\sum_{k=1}^K E\left[ \left|Z_k - X_{1}\right| \right] \\
&\leq\varepsilon,
\end{align*}
for all $n\geq N$.
\end{proof}

\section{Optimal rank $k$}
\label{sec:num}

In the standard SMOTE algorithm, the rank $k$ of the nearest neighbor is typically set to $k=5$ \cite{Chawla, Sakho}. However, Equation~\ref{distk} demonstrates that selecting lower values of $k$ is preferable. Concretely, since $P(Y \leq k - 1 \mid X_1) \leq P(Y \leq k \mid X_1)$, it follows that
\begin{equation} 
\label{compk} 
P(D_{(k)} \geq \varepsilon \mid X_1) \leq P(D_{(k+1)} \geq \varepsilon \mid X_1). 
\end{equation} 
The above inequality implies that the expected distance between $X_1$ and its $k$-th nearest neighbor $X_{1,(k)}$ increases with $k$. Since the synthetic sample $Z$ is generated between $X_1$ and $X_{1,(k)}$, a larger $k$ results in $Z$ being farther from $X_1$. To illustrate this effect, we compare the distribution of $Z$ to that of $X$ for various values of $k$.

Figure~\ref{fig:uniform} displays the distribution of $Z$ simulated for sample sizes $n = 8, 20, 70$ based on the uniform random variable $X \sim U(0,1)$. In Figure~\ref{fig:uniform1}, we used the nearest neighbor of rank $k=1$, whereas in Figure~\ref{fig:uniform5}, we used $k=5$. As shown in the figures, for both $k=1$ and $k=5$, the distribution of $Z$ converges to that of $X$ as $n$ increases. However, the convergence is significantly faster in the case of $k=1$ compared to $k=5$, especially for lower values of $n$.

\begin{figure*}[!h]
     \centering
     \begin{subfigure}[b]{0.80\textwidth}
         \centering
         \includegraphics[width=\textwidth]{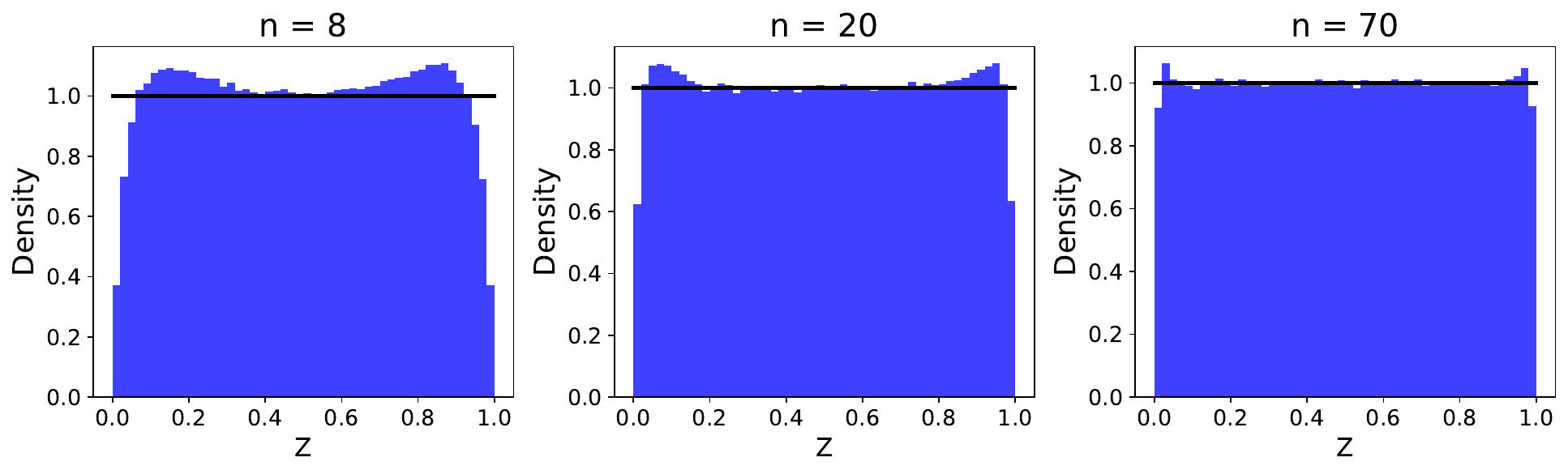}
         \caption{$k=1$}
         \label{fig:uniform1}
     \end{subfigure}
   \\
     \begin{subfigure}[b]{0.80\textwidth}
         \centering
         \includegraphics[width=\textwidth]{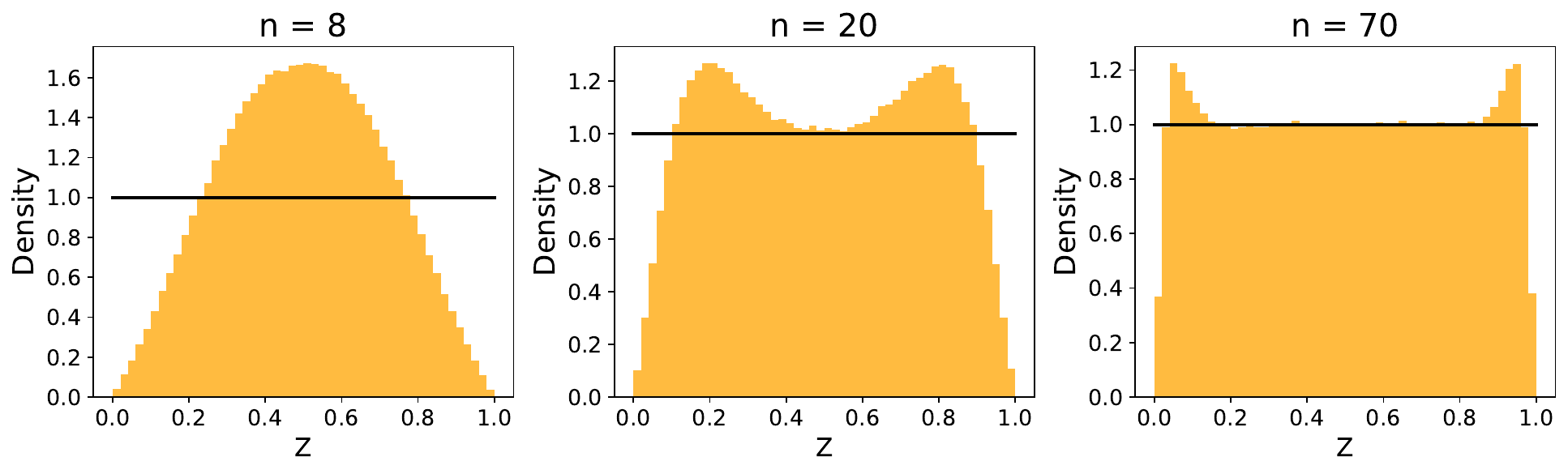}
         \caption{$k=5$}
         \label{fig:uniform5}
     \end{subfigure}
        \caption{Simulated distribution of $Z$ for sample sizes $n = 8, 20, 70$ based on  $X\sim U(0,1)$ together with the graph of $X$. It is clear that the distribution generated with $k=1$ (top) is closer to the true distribution than with $k=5$ (bottom).}
        \label{fig:uniform}
\end{figure*}

Similarly, we have conducted simulations using Gaussian ($X \sim \mathcal{N}(0,1)$) and exponential ($X \sim \mathrm{Exp}(1)$) random variables. The results, presented in Figure~\ref{fig:gaussian}, show that the distribution of $Z$ approaches to the original distribution $X$ faster for $k=1$ than for $k=5$.

\begin{figure*}[!h]
     \centering
     \begin{subfigure}[b]{0.80\textwidth}
         \centering
         \includegraphics[width=\textwidth]{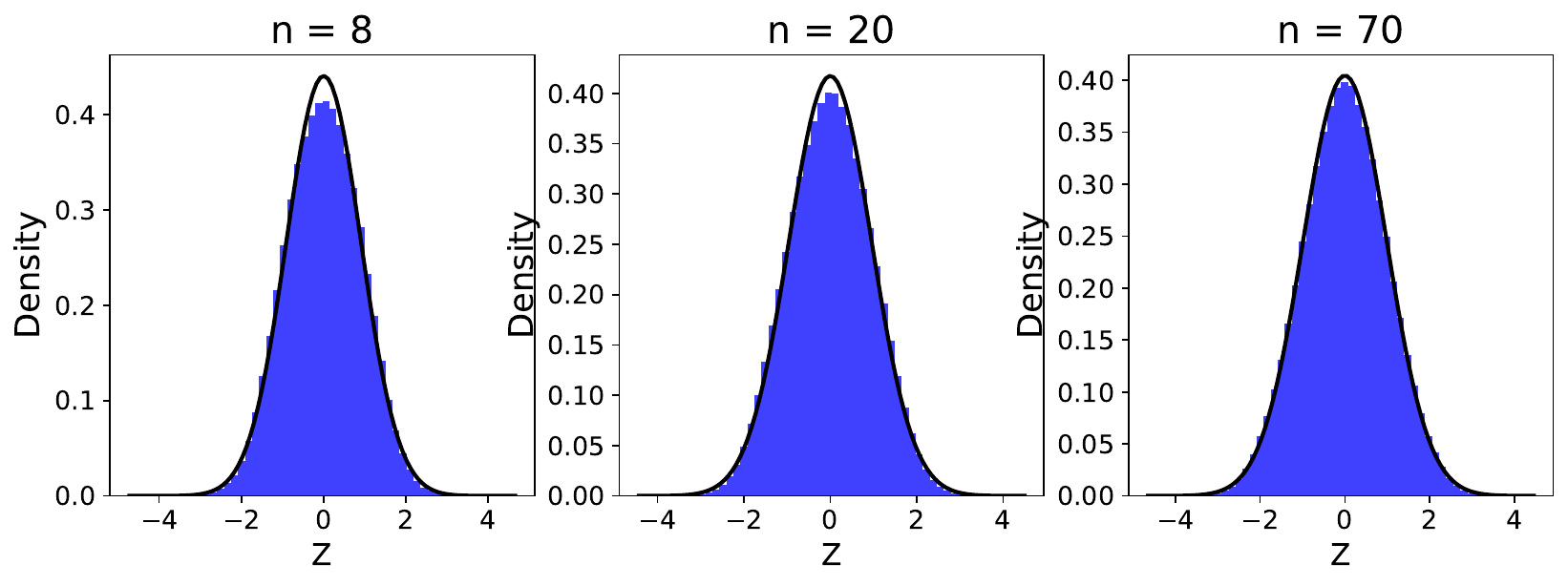}
         \caption{$k=1$}
         \label{fig:gaussian1}
     \end{subfigure}
    \\
     \begin{subfigure}[b]{0.80\textwidth}
         \centering
         \includegraphics[width=\textwidth]{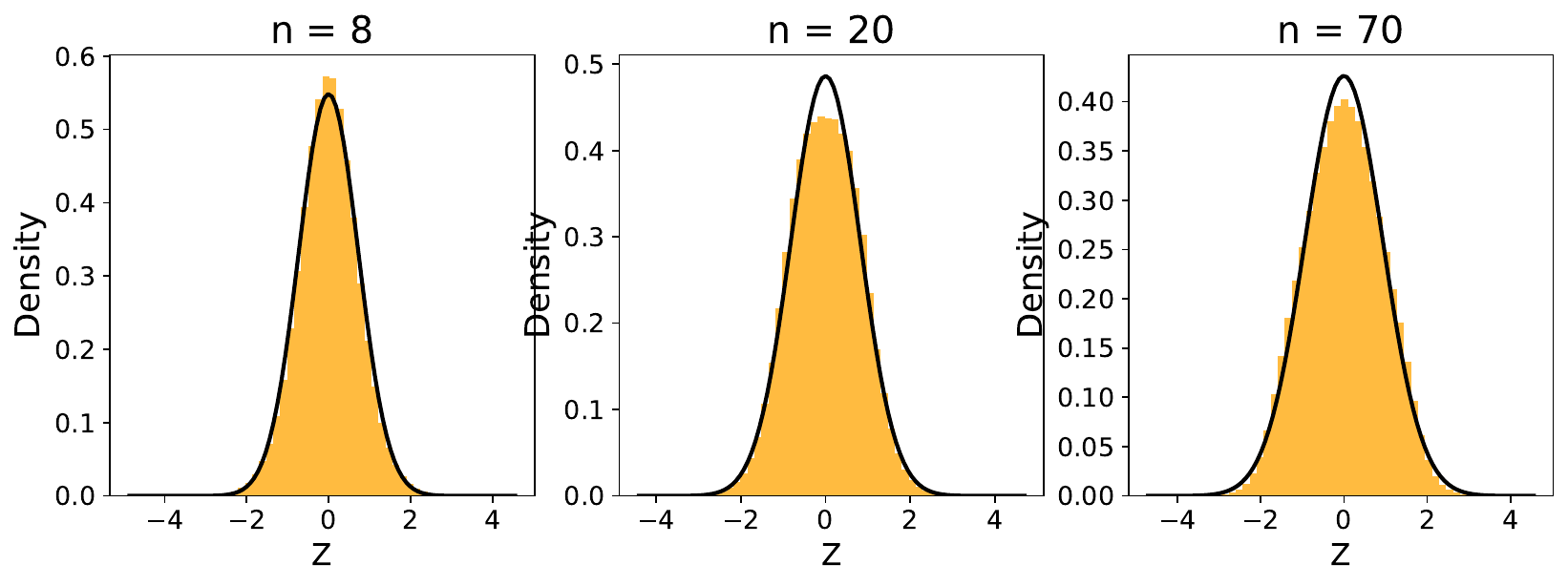}
         \caption{$k=5$}
         \label{fig:gaussian5}
     \end{subfigure}
        \caption{Simulated distribution of $Z$ for sample sizes $n=8, 20, 70$ based on $X \sim \mathcal{N}(0,1)$ together with the graph of $X$. It is clear that the distribution generated with $k=1$ is closer to the true distribution than with $k=5$.}
        \label{fig:gaussian}
\end{figure*}

\begin{figure*}[!h]
     \centering
     \begin{subfigure}[b]{0.80\textwidth}
         \centering
         \includegraphics[width=\textwidth]{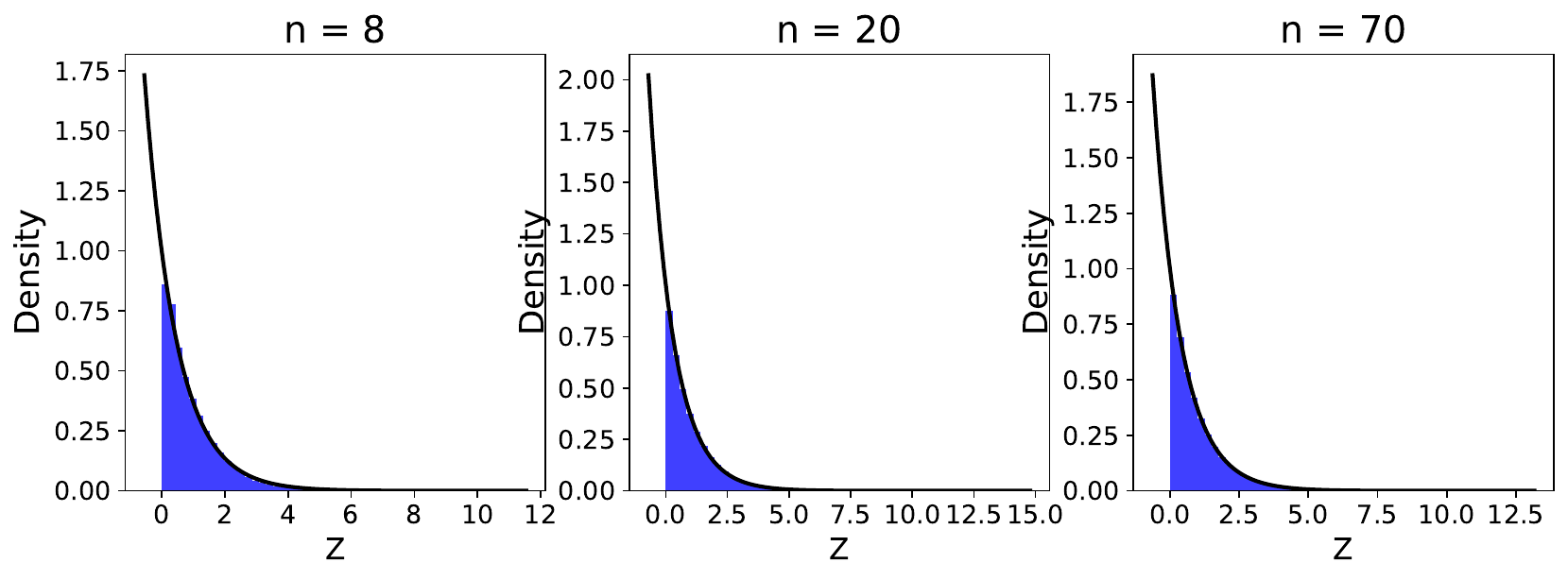}
         \caption{$k=1$}
         \label{fig:exp1}
     \end{subfigure}
    \\
     \begin{subfigure}[b]{0.80\textwidth}
         \centering
         \includegraphics[width=\textwidth]{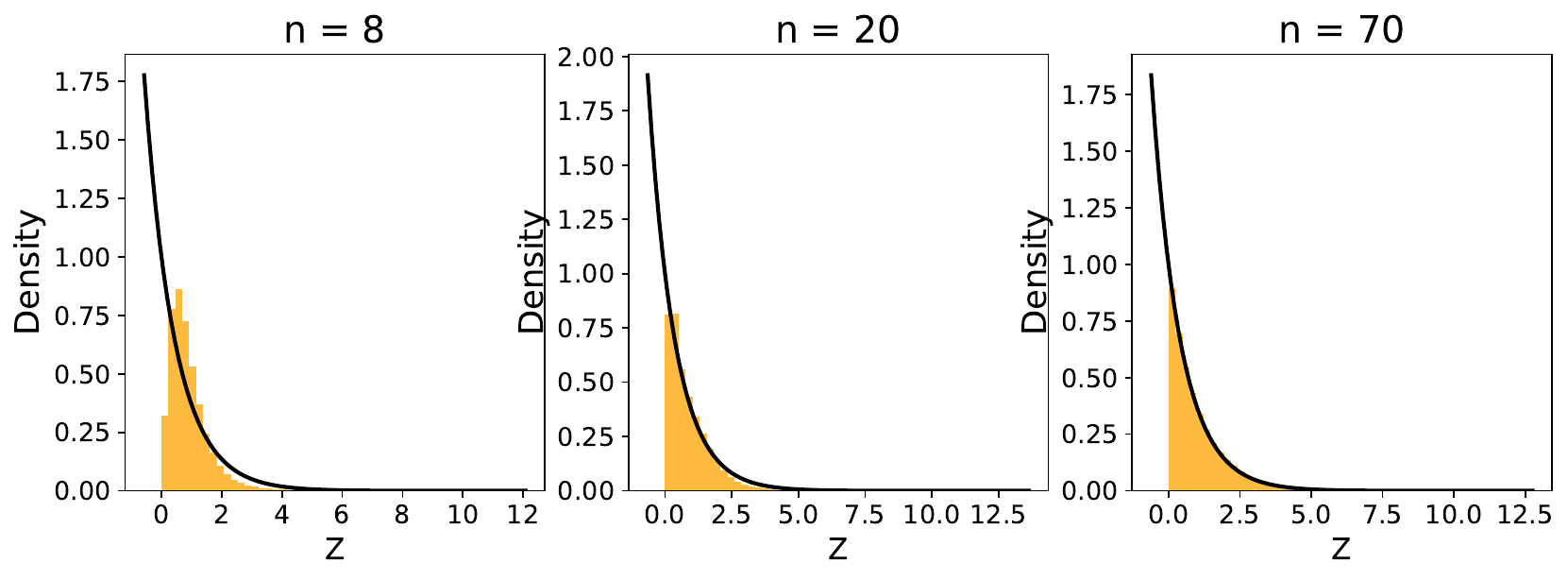}
         \caption{$k=5$}
         \label{fig:exp5}
     \end{subfigure}
        \caption{Simulated distribution of $Z$ for sample sizes $n=8, 20, 70$ based on $X \sim \mathrm{Exp}(1)$ together with the graph of $X$. It is clear that the distribution generated with $k=1$ is closer to the true distribution than with $k=5$.}
        \label{fig:exp}
\end{figure*}

To further assess the rate of convergence of the synthetic sample $Z$ to the original distribution $X$, we calculated the Kullback-Leibler (KL) divergence between the simulated distribution of $Z$ and the theoretical distribution of $X$ for nearest neighbor ranks $k = 1$ and $k = 5$ over a range of sample sizes $n$. As illustrated in Figure~\ref{fig:kl}, the KL divergence for both the uniform and Gaussian random variables decreases to zero as the sample size $n$ increases. Notably, the KL divergence is consistently smaller for $k = 1$ than for $k = 5$ across all values of $n$, particularly for smaller sample sizes. In the case of the exponential random variable, the KL divergence for $k = 1$ approaches zero, whereas for $k = 5$, it initially increases before starting to decrease around $n = 55$.

\begin{figure*}[!h]
         \centering
         \includegraphics[width=\textwidth]{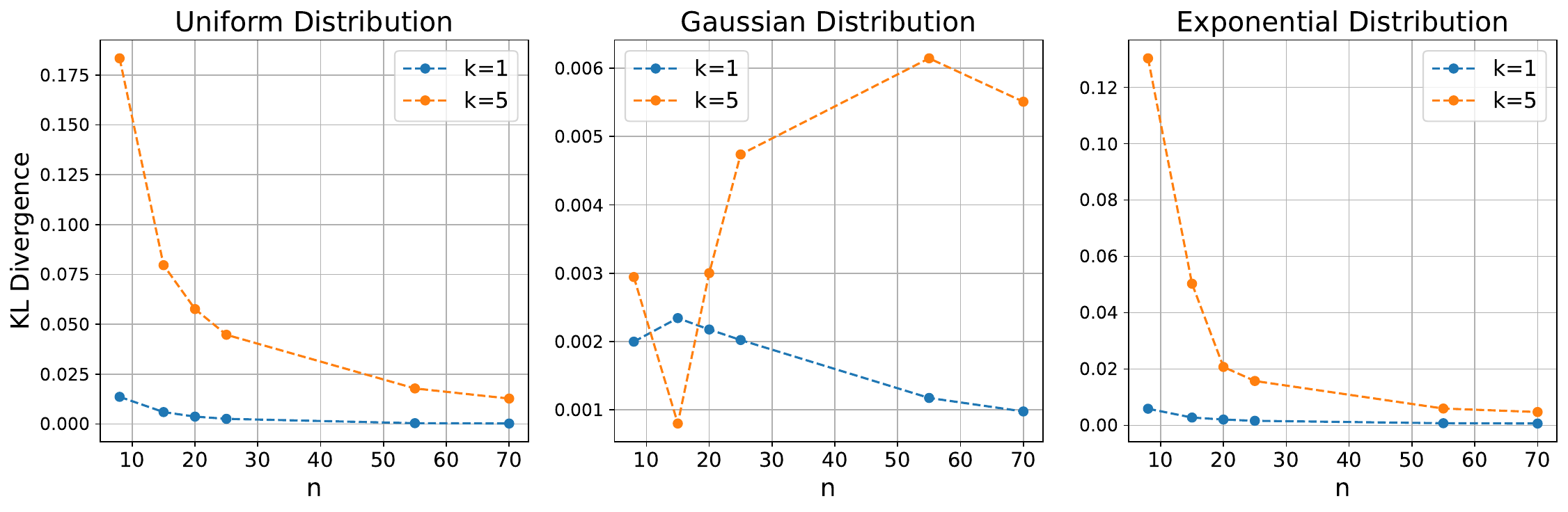}
        \caption{KL-divergence between the  simulated distribution of $Z$ and the theoretical distribution of $X$ for nearest neighbor ranks $k = 1$ and $k = 5$.}
        \label{fig:kl}
\end{figure*}

\section{Conclusion}
\label{sec:conclusion}

In this study, we conducted a thorough theoretical examination of the convergence behavior of the SMOTE algorithm. Our analysis demonstrated that the synthetic samples \( Z \) generated by SMOTE converge in probability to the original data distribution \( X \). In addition, under the assumption of compact support for \( X \), we established that \( Z \) also converges to \( X \) in mean, providing a stronger form of convergence. Thus, our results offer a theoretical validation that the SMOTE algorithm accurately approximates the original data distribution for large sample sizes.

The practical implications of our findings are significant for those applying SMOTE to address imbalanced data challenges. Our main results provide theoretical support to the validity of SMOTE, allowing researchers and practitioners to use the SMOTE algorithm with confidence. By recommending smaller \( k \) values, our work provides actionable guidance that can lead to more accurate and representative synthetic samples, thereby improving the performance and fairness of machine learning models in critical applications such as healthcare, finance, and security. Additionally, similar techniques can be applied to analyze convergence in higher dimensions. Future research could extend the theoretical framework to other data augmentation techniques, further enhancing our understanding and application of methods for handling imbalanced data.



\end{document}